\documentclass[letterpaper, 10 pt, conference]{ieeeconf}   
\IEEEoverridecommandlockouts                    
\let\labelindent\relax
\usepackage{enumitem}

\usepackage{mathtools,amsthm}

\usepackage{graphics} 
\usepackage{epsfig}
\usepackage{times} 
\usepackage{amsmath,amssymb,amsfonts}
\usepackage{mathrsfs}
\usepackage{algorithm}
\usepackage{algorithmic}
\usepackage{xurl}
\usepackage{enumitem}
\usepackage{hyperref}
\usepackage{graphicx}
\usepackage{textcomp}
\usepackage{varioref}
\usepackage{import}
\usepackage{framed,xcolor}
\usepackage{color}
\usepackage{comment}
\usepackage{multirow}
\usepackage{epstopdf}   
\usepackage{psfrag}
\usepackage{breqn}
\usepackage{float}
\usepackage{mathtools}
\usepackage{booktabs}
\usepackage{subcaption}
\usepackage{soul}
\usepackage{amsbsy}
\usepackage[bottom]{footmisc}
\usepackage{lineno}
\usepackage{cleveref}
\usepackage{microtype}
\usepackage{siunitx}
\usepackage{comment}
\usepackage{multicol}
\newtheorem{theorem}{Theorem}

\newtheorem{Assumption}{Assumption}
\definecolor{arash}{rgb}{0.8,0.8,1}
\definecolor{seb}{rgb}{0.8,1,0.8}
\definecolor{seb2}{rgb}{0.5,.5,1}
\definecolor{arash2}{rgb}{0,.5,0}
\definecolor{wenqi}{rgb}{1,.75,0.79}
\definecolor{wenqi2}{rgb}{1,.75,0.79}
\graphicspath{{Figures/}}

\graphicspath{{Figures/}}
\newcommand{\vect}[1]{\ensuremath{\boldsymbol{\mathrm{#1}}}}
\usepackage{graphicx}
\usepackage{tabularx,booktabs}
\usepackage{color}
\usepackage{multicol}
\usepackage{amsmath,amssymb}
\usepackage{amsfonts}
\usepackage{textcomp}
\usepackage{psfrag}
\usepackage{multimedia}
\usepackage{fancybox}
\usepackage{comment}
\usepackage{soul}
\makeatletter
\newcommand{\biggg}{\bBigg@{1.6}}  

\makeatother

\definecolor{seb}{rgb}{0.8,1,0.8}
\definecolor{arash}{rgb}{0.8,0.8,1}

\newcounter{lastnote}

\usepackage{geometry}
\title{\LARGE \bf
Cost-Matching Model Predictive Control for Efficient\\Reinforcement Learning in Humanoid Locomotion}
\author{Wenqi Cai$^{1}$,  
Kyriakos G. Vamvoudakis$^{2}$, 
Sébastien Gros$^{3}$,
and Anthony Tzes$^{1}$
\thanks{$^{1}$Wenqi Cai and Anthony Tzes are with the Electrical Engineering Program, New York University Abu Dhabi (NYUAD), 129188, UAE. Email: wenqi.cai@nyu.edu; anthony.tzes@nyu.edu}
\thanks{$^{2}$Kyriakos G. Vamvoudakis is with the School of Aerospace Engineering, Georgia Institute of Technology, Atlanta, GA 30332, United States. Email: kyriakos@gatech.edu}
\thanks{$^{3}$Sébastien Gros is with the Department of Engineering Cybernetics, Norwegian University of Science and Technology (NTNU), Trondheim, Norway. Email: sebastien.gros@ntnu.no}
\thanks{This work was supported in part by: a) NSF under grant Nos. SLES-$2415479$, CPS-$2227185$, b) NASA ULI under grant No. $80$NSSC$25$M$7104$, and c) the NYUAD Center for AI and Robotics, funded by Tamkeen under the NYUAD Research Institute Award CG010.}
}
\begin{document}
\maketitle
\thispagestyle{empty}
\pagestyle{empty}
%%%%%%%%%%%%%%%%%%%%%%%%%%%%%%%%%%%%%%%%%%%%%%%%%%%%%%%%%%%%%%%%%%%%%%%%%%%%%%%%
\begin{abstract}
In this paper, we propose a cost-matching approach for optimal humanoid locomotion within a Model Predictive Control (MPC)-based Reinforcement Learning (RL) framework. A parameterized MPC formulation with centroidal dynamics is trained to approximate the action-value function obtained from high-fidelity closed-loop data. Specifically, the MPC cost-to-go is evaluated along recorded state–action trajectories, and the parameters are updated to minimize the discrepancy between MPC-predicted values and measured returns. This formulation enables efficient gradient-based learning while avoiding the computational burden of repeatedly solving the MPC problem during training. The proposed method is validated in simulation using a commercial humanoid platform. Results demonstrate improved locomotion performance and robustness to model mismatch and external disturbances compared with manually tuned baselines.
\end{abstract}
% %%%%%%%%%%%%%%%%%%%%%%%%%%%%%%%%%%%%%%%%%%%%%%%%%%%%%%%%%%%%%%%%%%%%%%%%%%%%%%%%
\section{Introduction}
\label{sec:introduction}
Humanoid robotics has advanced rapidly in recent years. Humanoid locomotion poses particular challenges due to intermittent contacts, high degrees of freedom, and strict safety constraints, while also requiring robustness to disturbances and modeling errors~\cite{li2026comprehensive}.

Model Predictive Control (MPC) has emerged as a dominant paradigm for humanoid locomotion because it provides a systematic mechanism to encode task objectives and enforce constraints through online optimization. To satisfy real-time requirements, many MPC pipelines rely on simplified or reduced-order models that enable lightweight computation and high update rates~\cite{chen2024beyond}. However, such simplified models may neglect important effects present during high-fidelity execution (e.g., contact realization, inertial coupling, and actuator or tracking-stack abstractions). Consequently, robust performance often relies heavily on expert tuning of cost weights and terminal penalties~\cite{katayama2023model}. Alternatively, richer formulations that tighten kinematic consistency or incorporate more detailed dynamics~\cite{galliker2022planar} can improve fidelity but typically increase computational complexity and complicate real-time deployment~\cite{gu2025humanoid}. In practice, this trade-off is often addressed using predictive--reactive hierarchies, where an MPC planner generates reference trajectories that are executed by a higher-rate tracking module~\cite{zhang2026gravity}. While effective for real-time operation, this architectural separation can amplify systematic planning--execution mismatches and make performance sensitive to cost design and tuning, particularly when operating with short horizons~\cite{carpentier2021recent}.

Reinforcement Learning (RL) has demonstrated strong capability in learning robust locomotion behaviors directly from data with efficient runtime execution~\cite{peters2003reinforcement}. However, humanoid locomotion represents a challenging learning regime. Training from scratch is often sample-inefficient and sensitive to reward design and curriculum choices, and successful deployments typically rely on large-scale simulation pipelines with additional mechanisms to mitigate sim-to-real discrepancies (e.g., domain randomization~\cite{haarnoja2024learning} and system identification~\cite{li2024learning}). Furthermore, purely learned policies do not naturally enforce hard safety and contact constraints~\cite{zhang2022deep}. These limitations have motivated control--learning formulations that connect RL with adaptive optimal control concepts and provide explicit convergence or stability analyses under disturbances~\cite{chen2025robot, gao2026output}. In this direction, MPC--learning hybrids aim to retain constrained optimization for deployment while leveraging learning to improve performance under model mismatch and uncertainty~\cite{zhang2022model, vamvoudakis2021handbook}.

In this context, this work builds upon our prior MPC-based Reinforcement Learning (MPC-RL) framework—initiated in~\cite{gros2019data} and further developed in subsequent works (e.g.,~\cite{zanon2020safe,cai2023energy,cai2021mpc})—which treats a parameterized MPC as a structured value/action-value function approximator or policy approximator. Within this framework, the predictive model, cost function, and constraints of the MPC are parameterized, and closed-loop behavior is improved by adjusting these parameters using RL signals. This approach allows learning to operate within the MPC structure, thereby preserving constraint handling and interpretability~\cite{cai2023learning}. However, a key limitation arises in complex, time-critical humanoid locomotion stacks: standard gradient-based MPC-RL methods typically require repeatedly solving the MPC optimization within the learning loop, making training prohibitively expensive when the MPC itself is already operating near real-time computational limits.

Motivated by this challenge, we propose Cost-Matching MPC (CM-MPC), an efficient learning procedure that preserves the MPC-RL perspective while avoiding solve-in-the-loop training. The central idea is to learn the MPC parameters $\vect\theta$ by minimizing the discrepancy between an MPC surrogate cost-to-go $Q^{\mathrm{MPC}}_{\vect\theta}$ and a measured long-horizon return $Q^{\mathrm{meas}}$ computed from closed-loop trajectories. Rather than differentiating through repeated MPC solves, $Q^{\mathrm{MPC}}_{\vect\theta}$ is evaluated by rolling out the parameterized predictive model along recorded action segments and accumulating parameterized stage costs, together with differentiable penalties for state-constraint violations. This formulation enables efficient gradient-based learning while maintaining deployment as a standard constrained receding-horizon MPC controller.

\textbf{Contributions.} The contributions of this work are twofold. First, we introduce a cost-matching learning framework that extends our prior MPC-RL formulation while substantially reducing the computational burden during training by eliminating the need for repeated MPC solves in the learning loop. The proposed framework is generic and can be applied to systems formulated as Optimal Control Problems (OCPs). Second, we demonstrate the approach on humanoid locomotion by developing a constrained OCP formulation based on centroidal dynamics and validating the learning procedure in high-fidelity simulation. An open-source implementation of the framework is also provided at \url{https://github.com/RISC-NYUAD/humanoid_mpc_cost_matching}.
%%%%%%%%%%%%%%%%%%%%%%%%%%%%%%%%%%%%%%%%%%%%%%%%%%%%%%%%%%%%%%%%%%%%%%%%%%%%%%%%%%%%%%%%%%%%%%%%%
%%%%%%%%%%%%%%%%%%%%%%%%%%%%%%%%%%%%%%%%%%%%%%%%%%%%%%%%%%%%%%%%%%%%%%%%%%%%%%%%%%%%%%%%%%%%%%%%%
\section{Problem Formulation}
\label{sec:problem_formulation}
Consider humanoid locomotion control within a hierarchical control architecture \cite{sleiman2021unified}. In this architecture, an upper-level nonlinear MPC planner operates at a lower frequency and generates reference commands, while a high-frequency Proportional Derivative (PD)-type tracking controller maps the MPC commands to joint torques (from $\vect u$ to $\tilde{\vect u}$), as shown in Fig.~\ref{fig:MPC-RL_framework}.
\begin{figure}[htbp]
  \centering
  \includegraphics[width=0.91\linewidth]{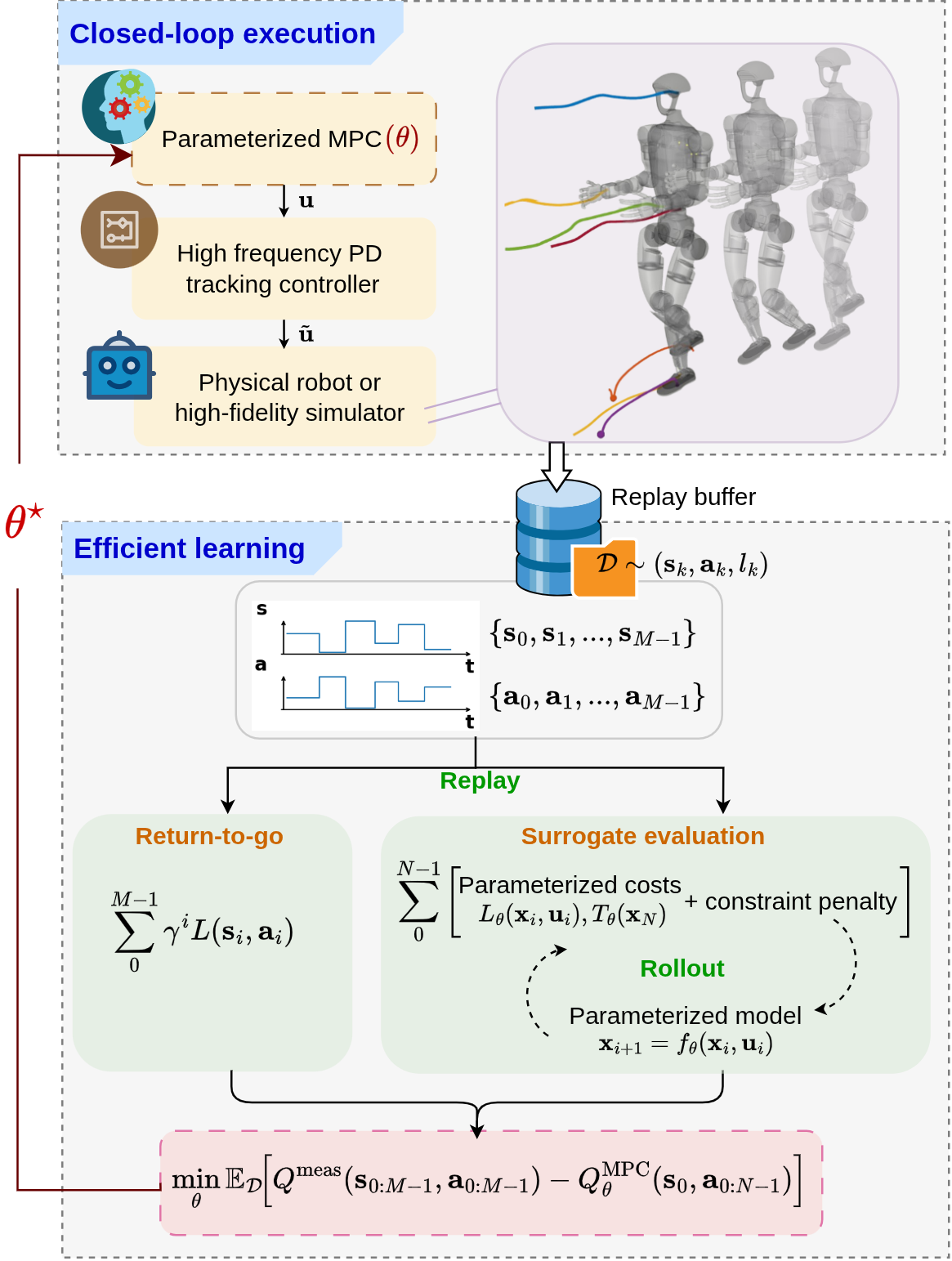}
  \caption{Cost-Matching MPC-RL framework for humanoids.}
  \label{fig:MPC-RL_framework}
\end{figure}
%%%%%%%%%%%%%%

In this work, we shall focus on the learning-based parameterization of the MPC layer, while the low-level tracking controller is treated as a fixed module. Fig.~\ref{fig:MPC-RL_framework} provides an overview of the proposed CM-MPC framework for humanoids, and the remaining components are introduced in the subsequent sections.

%%%%%%%%%%%%%%%%%%%%%%%%%%%%%%%%%%%%%%%%%%%%%%%%%%%%%%%%%%%%%%%%%%
\subsection{Centroidal Dynamics}
\label{subsec:system_dynamics}

A complete \textit{centroidal dynamics} model is adopted in this work, building on the OCS2 centroidal formulation~\cite{OCS2,sleiman2021unified}. Let $\mathcal{I}$ denote the inertial (world) frame and $\mathcal{B}$ the base frame attached to the robot trunk. The generalized coordinates of the robot are defined as $\vect q = [\vect p_b^\top, \vect \theta_b^\top, \vect q_j^\top]^\top \in \mathbb{R}^{6+n_j}$, where $\vect p_b \in \mathbb{R}^3$ denotes the base position in $\mathcal{I}$, $\vect \theta_b \in \mathbb{R}^3$ represents the base orientation (ZYX Euler angles), and $\vect q_j \in \mathbb{R}^{n_j}$ denotes the joint angles.

The state $\vect x \in \mathbb{R}^{12+n_j} :=  \begin{bmatrix} \vect h_{\mathrm{lin}}^\top & \vect h_{\mathrm{ang}}^\top & \vect q^\top \end{bmatrix}^\top$ concatenates the centroidal momentum $\vect h$ and the configuration $\vect q$, where $\vect h_{\mathrm{lin}}, \vect h_{\mathrm{ang}} \in \mathbb{R}^3$ denote the linear and angular momenta expressed in $\mathcal{I}$. The control input $\vect u \in \mathbb{R}^{12+n_j}$ consists of the contact wrenches for the left ($L$) and right ($R$) feet together with the joint velocities
$
\vect u := \begin{bmatrix} \vect w_{c,L}^\top & \vect w_{c,R}^\top & \vect v_j^\top \end{bmatrix}^\top,
$
where $\vect w_{c,i} = [\vect f_{c,i}^\top, \vect m_{c,i}^\top]^\top \in \mathbb{R}^6$ represents the contact force and moment exerted at the $i$-th foot ($i \in \{L, R\}$), and $\vect v_j \in \mathbb{R}^{n_j}$ denotes the commanded joint velocity.

The system dynamics $\dot{\vect x} = \vect f_{\mathrm{dyn}}(\vect x, \vect u)$ is
\begin{subequations}
\label{eq:centroidal_dynamics}
\begin{align}
\dot{\vect h}_{\mathrm{lin}} &= \sum_{i \in \{L, R\}} \vect f_{c,i} + M \vect g, \\
\dot{\vect h}_{\mathrm{ang}} &= \sum_{i\in\{L,R\}} \vect r_{c,i}(\vect q)\times \vect f_{c,i} + \vect m_{c,i},\\
\dot{\vect q} &= \vect \Xi(\vect q) \begin{bmatrix} \vect v_b^\top & \vect \omega_b^\top & \vect v_j^\top \end{bmatrix}^\top,
\end{align}
\end{subequations}
where $M$ denotes the total mass and $\vect g$ is the gravitational acceleration. The vector $\vect r_{c,i}$ denotes the moment arm from the Center of Mass (CoM) to the contact point of foot $i$, expressed in the inertial frame $\mathcal{I}$. The matrix $\vect \Xi(\vect q)$ maps generalized velocities to the configuration derivative $\dot{\vect q}$. These transformation-related quantities can be efficiently computed using rigid-body algorithms provided by the Pinocchio library~\cite{carpentier2019pinocchio}. The base velocity variables $(\vect v_b, \vect \omega_b)$ are determined by the centroidal momentum $\vect h$ and the joint velocities $\vect v_j$ through
$\vect h = \vect A(\vect q) \begin{bmatrix} \vect v_b \\ \vect \omega_b \\ \vect v_j \end{bmatrix} = \vect A_b(\vect q) \begin{bmatrix} \vect v_b \\ \vect \omega_b \end{bmatrix} + \vect A_j(\vect q) \vect v_j$, where $\vect A(\vect q)$ is the Centroidal Momentum Matrix (CMM)~\cite{orin2013centroidal}, which captures the inertial coupling between base and limb motions. The matrices $\vect A_b$ and $\vect A_j$ denote the corresponding block partitions of the CMM. The base velocity can therefore be obtained as
$\begin{bmatrix} \vect v_b \\ \vect \omega_b \end{bmatrix} = \vect A_b(\vect q)^{-1} \left( \vect h - \vect A_j(\vect q) \vect v_j \right)$, ensuring that the kinematic evolution remains consistent with the centroidal momentum dynamics.

%%%%%%%%%%%%%%%%%%%%%%%%%%%%%%%%%%%%%%%%%%%%%%%%%%%%%%%%%%%%%%%%%%%%%%%%%%%%%%%%
\subsection{Constraints}
\label{subsec:constraints}

The feasible state–input set is defined through equality and inequality constraints. To distinguish phase-dependent constraints, let $\mathcal P_{\mathrm{st}}$ and $\mathcal P_{\mathrm{sw}}$ denote the sets of stance and swing phases, respectively.

\subsubsection{Inequality Constraints}

\begin{itemize}[leftmargin=*, nosep]

\item Joint Limits. The joint configurations are restricted by mechanical limits: $\vect q_{j}^{\min} \le \vect q_j \le \vect q_{j}^{\max}.$

\item Foot Collision. To prevent self-collision, the minimum distance between the left and right feet, computed using Signed Distance Fields (SDF) $\Phi_{\text{sdf}}$, must maintain a safety margin $d_{\mathrm{safe}}$: $\Phi_{\text{sdf}}(\vect q) \ge d_{\mathrm{safe}}.$

\item Friction Cone. To avoid foot slippage, the contact force for any active foot must lie within the friction cone: $\sqrt{\left(f_{c,i}^x\right)^2+\left(f_{c,i}^y\right)^2}\le \mu f_{c,i}^z, \,\forall i \in \mathcal{P}_{st}.$

\item Center of Pressure (CoP). To maintain contact stability with foot dimensions $(d_x,d_y)$, the CoP must remain within the support region: $| m_{c,i}^x | \le d_y f_{c,i}^z, \, | m_{c,i}^y | \le d_x f_{c,i}^z, \, \forall i \in \mathcal{P}_{st}.$

\end{itemize}

%%%%%%%%%%%%%%%%%%%%%%%%%%%%%%%%%%%%%%%
\subsubsection{Equality Constraints}

\begin{itemize}[leftmargin=*, nosep]

\item Zero Velocity. During stance, the foot maintains rigid ground contact and the linear velocity of the foot contact frame is constrained to zero: $\vect v_{c,i}=\vect J_{c,i}(\vect q)\dot{\vect q}=\vect 0,\,\forall i\in\mathcal P_{\mathrm{st}},$
where $\vect J_{c,i}(\vect q)$ denotes the contact Jacobian.

\item Zero Wrench. Feet in the swing phase are contact-free, and their contact wrench must vanish: $\vect w_{c,i} = \vect 0, \, \forall i \in \mathcal{P}_{\mathrm{sw}}.$

\item Normal Velocity Tracking. The normal ($z$-axis) velocity of the swing foot is constrained to follow a reference profile $v_{\mathrm{ref}}^z$ to ensure accurate landing: $v_{c,i}^z = v_{\mathrm{ref}}^z(t), \, \forall i \in \mathcal{P}_{\mathrm{sw}}.$

\end{itemize}
%%%%%%%%%%%%%%%%%%%%%%%%%%%%%%%%%%%%%%%%%%%%%%%%%%%%%%%%%%%%%%%%%%%%%%%%%%%%%%%%%%%%%%%%%%%%%%%%%
\subsection{Optimal Control Objective}
\label{subsec:mpc_cost}

Humanoid locomotion is formulated as a constrained OCP over a finite prediction horizon $N$. Given the centroidal dynamics and feasibility constraints, the planner seeks an optimal control sequence that minimizes
\begin{equation}
J_{\mathrm{MPC}} := T(\vect x_N) + \sum_{i=0}^{N-1} L(\vect x_i,\vect u_i),
\end{equation}
where $L(\cdot)$ and $T(\cdot)$ denote the stage and terminal costs, respectively. The stage cost is composed of
\begin{equation}
\label{eq:ocp_l}
L = L_{\mathrm{trac}} + L_{\mathrm{base}} + L_{\mathrm{com}} + L_{\mathrm{swin}} + L_{\mathrm{torq}},
\end{equation}
capturing reference tracking, dynamic stability, and physical plausibility. Specifically:
\begin{itemize}[leftmargin=*, nosep]

\item We enforce tracking of the reference state trajectory $\mathbf{x}_{\mathrm{ref}}$ while regularizing the control effort:
$L_{\mathrm{trac}} = \| \vect x - \vect x_{\mathrm{ref}} \|_{\vect Q}^2 + \| \vect u - \vect u_{\mathrm{ref}}\|_{\vect R}^2.$

\item A task-space kinematic penalty is imposed on a base-attached frame to regularize upper-body motion: $L_{\mathrm{base}}=\left\|\vect e_{\mathrm{base}}(\vect x,\vect u)\right\|_{\vect Q_{\mathrm{base}}}^{2}$, where $\vect e_{\mathrm{base}}\in\mathbb R^{18}$ collects the task-space tracking errors. This term mitigates upper-body oscillations and promotes an upright posture during locomotion.

\item To improve balance robustness, the horizontal CoM position is regularized toward the midpoint of the active foot contacts: $L_{\mathrm{com}}=\left\|
\vect p_{\mathrm{com}}^{xy}(\vect q)-
\vect p_{\mathrm{mid}}^{xy}(\vect q)
\right\|_{\vect Q_{\mathrm{com}}}^{2}$, where $\vect p_{\mathrm{com}}^{xy}$ denotes the CoM position projected onto the horizontal plane, and $\vect p_{\mathrm{mid}}^{xy}:=\tfrac{1}{2}\big(\vect p_{c,L}^{xy}(\vect q)+\vect p_{c,R}^{xy}(\vect q)\big)$ denotes the midpoint between the left and right foot contacts.

\item To regulate foot clearance and landing orientation, deviations of the swing-foot pose and twist from their references are penalized: $L_{\mathrm{swin}}=\sum_{i\in\mathcal P_{\mathrm{sw}}}\left\|\vect e_{\mathrm{sw},i}(\vect x,\vect u)\right\|_{\vect Q_{\mathrm{sw}}}^{2}$, where $\vect e_{\mathrm{sw},i}\in\mathbb R^{18}$ denotes the task-space tracking error of the swing-foot frame relative to its reference trajectory.

\item To discourage contact force distributions that generate large joint torques, the contact wrench is regularized through the induced generalized torques: $L_{\mathrm{torq}}=\sum_{i\in\mathcal P_{\mathrm{st}}}
\left\|\vect J_{c,i}(\vect q)^\top \vect w_{c,i}\right\|_{\vect Q_{\mathrm{torq}}}^{2}$, where $\vect J_{c,i}(\vect q)$ denotes the contact Jacobian.

\end{itemize}
The terminal cost is defined as a quadratic penalty on the terminal state,
\begin{equation}
T(\vect x_N)=\left\|\vect x_N-\vect x_{N}^{\mathrm{ref}}\right\|_{\vect Q_f}^{2}.
\end{equation}

The above OCP provides a structured mechanism for generating feasible locomotion behaviors. Nevertheless, model mismatch and unmodeled effects can lead to suboptimal MPC policies and degraded closed-loop performance. To address this issue, the next section introduces a data-driven adaptation of the MPC parameterization using rollout data collected during execution.

%
%%%%%%%%%%%%%%%%%%%%%%%%%%%%%%%%%%%%%%%%%%%%%%%%%%%%%%%%%%%%%%%%%%%%%%%%%%%%%%%%%%%%%%%%%%%%%%%%%
\section{Cost-Matching MPC-Based\\Reinforcement Learning}
\label{sec:cost_matching}

We present the cost-matching method using the general Markov decision process (MDP) notation in order to emphasize that the proposed approach applies to any problem that can be formulated as a constrained OCP and for which real-world data is available.

\subsection{Parameterized MPC as a Q Function}
\label{sec:cm_mpc_formulation}

Consider a discounted infinite-horizon MDP with state $\vect s\in\mathcal S$, action $\vect a\in\mathcal A$, stage cost $L(\vect s,\vect a)$, and discount factor $\gamma\in(0,1]$. The objective is to minimize the expected discounted cumulative cost. To obtain a structured and differentiable approximation of the optimal policy and its associated action-value function, we employ a finite-horizon parameterized model predictive control (MPC) scheme as a function approximator. The parameter vector $\vect\theta\in\mathbb R^{p}$ jointly parameterizes the predictive model and cost functions of the MPC, enabling data-driven adaptation in the presence of model mismatch and uncertainties.

\paragraph{Parameterized MPC scheme}

Given a measured state $\vect s$, the horizon-$N$ parameterized MPC problem is
\begin{subequations}\label{eq:cm_mpc}
\begin{align}
\min_{\vect x,\,\vect u}\;&
T_{\vect\theta}(\vect x_N)+\sum_{i=0}^{N-1}L_{\vect\theta}(\vect x_i,\vect u_i),
\label{eq:cm_mpc_cost}\\
\mathrm{s.t.}\;& \quad \forall \,\, i=0,\ldots,N-1 \nonumber\\
& \quad \vect x_{i+1}= \vect f_{\vect\theta}(\vect x_i,\vect u_i),
%& 
\quad \vect g(\vect u_i)\le \vect 0,
\label{eq:cm_mpc_ucon}\\
& \quad \vect h(\vect x_i,\vect u_i)\le \vect 0,\quad \vect c(\vect x_i,\vect u_i)= \vect 0,
\label{eq:cm_mpc_xcon_eqcon}\\
& \quad \vect h^{f}(\vect x_N)\le \vect 0,\quad \vect c^{f}(\vect x_N)= \vect 0,
\label{eq:cm_mpc_termcon_termeqcon}\\
& \quad \vect x_{0}=\vect s.
\label{eq:cm_mpc_init}
\end{align}
\end{subequations}
The predictive model $\vect f_{\vect\theta}$, stage cost $L_{\vect\theta}$, and terminal cost $T_{\vect\theta}$ are parameterized and adapted through learning.

\paragraph{MPC-induced action-value function}

To enable learning without repeatedly solving the optimization problem, we define the MPC-induced action-value along given actions recorded from the real system. For a length-$N$ action segment $\vect a_{0:N-1}:=\{\vect a_0,\dots,\vect a_{N-1}\}$, the predicted rollout is
\begin{equation}
\label{eq:cm_rollout_dyn}
\vect x_0=\vect s,~~
\vect x_{i+1}=\vect f_{\vect\theta}(\vect x_i,\vect a_i),
\quad i=0,\dots,N-1.
\end{equation}

Note that even when the recorded actions are feasible on the real system, the predicted trajectory generated by \eqref{eq:cm_rollout_dyn} may violate state-related constraints due to model mismatch. Consequently, constraint-violation penalties are incorporated directly into the rollout-based value evaluation. Define the per-stage and terminal violation residuals
\begin{equation}
\label{eq:cm_residual}
\vect r_i :=
\begin{bmatrix}
\big[\vect h_{\vect\theta}(\vect x_i,\vect a_i)\big]_+\\[2pt]
\vect c_{\vect\theta}(\vect x_i,\vect a_i)
\end{bmatrix},
\quad 
\vect r_N :=
\begin{bmatrix}
\big[\vect h_{\vect\theta}^{f}(\vect x_N)\big]_+\\[2pt]
\vect c_{\vect\theta}^{f}(\vect x_N)
\end{bmatrix},
\end{equation}
where $[\cdot]_+$ denotes the element-wise operator $[y]_+=\max(y,0)$. Let the weights $\vect W,\vect W_f\succ \vect 0$, and define
\begin{equation}
\label{eq:cm_penalty}
\phi(\vect r_i):=\|\vect r_i\|_{\vect W}^{2},\quad
\phi_f(\vect r_N):=\|\vect r_N\|_{\vect W_f}^{2}.
\end{equation}
The MPC-induced action-value function is then defined as
\begin{align}
Q_{\vect\theta}^{\mathrm{MPC}}(\vect s,\vect a_{0:N-1})
:=&T_{\vect\theta}(\vect x_N)
+\phi_f(\vect r_N) +\nonumber\\
&\sum_{i=0}^{N-1}\Big(
L_{\vect\theta}(\vect x_i,\vect a_i)
+\phi(\vect r_i)
\Big),
\label{eq:cm_Q_mpc_def}
\end{align}
where $\{\vect x_i\}$ and $\{\vect r_i\}$ are generated by \eqref{eq:cm_rollout_dyn} and \eqref{eq:cm_residual}. Importantly, \eqref{eq:cm_Q_mpc_def} can be evaluated through forward rollout and cost accumulation, and is differentiable with respect to $\vect\theta$, without requiring the solution of \eqref{eq:cm_mpc} during training.

%%%%%%%%%%%%%%%%%%%%%%%%%%%%%%%%%%%%%%%%%%%%%%%%%%%%%%%%%%%%%%%%%%%%%%%%%%%%%%%%%%%%%%%%%%%%%%%%%
\subsection{Action-Value Evaluation via Data Rollouts}
\label{sec:cm_data_value}

Consider a dataset of trajectories collected from the real environment $\mathcal D = \{(\vect s_k,\vect a_k,\ell_k)\}_{k=0}^{M-1}$, where $(M\!\gg\!N)$ denotes the trajectory length and $\ell_k = L(\vect s_k,\vect a_k)$ is the observed stage cost. The trajectories need not be optimal. For each time index $k$, we define the measured discounted return-to-go along the recorded trajectory as
\begin{equation}
Q^{\mathrm{meas}}(\vect s_k,\vect a_k) :=
\sum_{i=0}^{M-k-1}\gamma^i\,L(\vect s_{k+i},\vect a_{k+i}).
\label{eq:cm_Q_meas}
\end{equation}
This quantity represents the long-term cost accumulated by the system under the executed actions. In the proposed framework, $Q^{\mathrm{meas}}$ serves as the learning target, while $Q_{\vect\theta}^{\mathrm{MPC}}$ provides a finite-horizon, model-based approximation evaluated along the same recorded action segment. The discrepancy between these quantities primarily reflects a systematic mismatch between the real system and the parameterized predictive model.

Given an index $k$, we compute $Q^{\mathrm{MPC}}_{\vect\theta}$ by initializing \eqref{eq:cm_rollout_dyn} at $\vect x_0=\vect s_k$ and rolling out the parameterized model over the recorded action segment $(\vect a_k,\dots,\vect a_{k+N-1})$. This estimate is compared with $Q^{\mathrm{meas}}$, which accumulates the discounted costs along the measured trajectory from $k$ to $M\!-\!1$.

From an RL perspective, $Q^{\mathrm{MPC}}_{\vect\theta}$ and $Q^{\mathrm{meas}}$ represent two evaluations of the action-value at the same anchor state $\vect s_k$ for the executed action segment: a finite-horizon model-based surrogate and the measured rollout return. While typically $M\!\gg\!N$, the discount factor $\gamma$ in $Q^{\mathrm{meas}}$ and the terminal shaping $T_{\vect\theta}(\vect x_N)$ in $Q^{\mathrm{MPC}}$ compensate for the horizon mismatch. Discounting attenuates far-future costs, while the terminal term summarizes the residual cost-to-go beyond the prediction horizon. This observation highlights the importance of terminal cost design in short-horizon MPC and motivates learning its relative weight to align the surrogate with long-term performance.

%%%%%%%%%%%%%%%%%%%%%%%%%%%%%%%%%%%%%%%%%%%%%%%%%%%%%%%%%%%%%%%%%%%%%%%%%%%%%%%%%%%%%%%%%%%%%%%%%
\subsection{Cost-Matching Objective and Gradient}
\label{sec:cm_cost_matching}

The key idea behind the learning objective is to adjust $\vect\theta$ such that the MPC-induced value $Q^{\mathrm{MPC}}_{\vect\theta}$ evaluated along recorded action segments matches the long-term return $Q^{\mathrm{meas}}$ observed on the real system.

Given the dataset $\mathcal D$ and the measured returns \eqref{eq:cm_Q_meas}, the cost-matching objective is
\begin{equation}
\label{eq:cm_loss}
\min_{\vect\theta}\;
\mathcal L(\vect\theta)
:=
\mathbb E_{\mathcal D}\!\left[
\big(
Q^{\mathrm{MPC}}_{\vect\theta}
-
Q^{\mathrm{meas}}
\big)^2
\right].
\end{equation}
The expectation in \eqref{eq:cm_loss} is approximated using empirical averages over mini-batches. For a fixed recorded action segment, $Q^{\mathrm{MPC}}_{\vect\theta}$ is obtained via the forward rollout \eqref{eq:cm_rollout_dyn} and accumulation of the cost and penalty terms in \eqref{eq:cm_Q_mpc_def}. Therefore, $\nabla_{\vect\theta}Q^{\mathrm{MPC}}_{\vect\theta}$ can be computed by differentiating through the rollout recursion and the corresponding cost and penalty evaluations. The loss gradient is

\begin{equation}
\label{eq:cm_grad}
\nabla_{\vect\theta}\mathcal L(\vect\theta)
=
\mathbb E_{\mathcal D}\!\left[
2\big(
Q^{\mathrm{MPC}}_{\vect\theta}
-
Q^{\mathrm{meas}}
\big)
\nabla_{\vect\theta} Q^{\mathrm{MPC}}_{\vect\theta}
\right].
\end{equation}
Since training does not require solving \eqref{eq:cm_mpc}, gradient evaluation reduces to differentiating a length-$N$ rollout and scales linearly with the horizon length $N$. The parameters are updated using a first-order optimization method,
\begin{equation}
\label{eq:cm_update}
\vect\theta \leftarrow \vect\theta - \alpha \nabla_{\vect\theta}\mathcal L(\vect\theta),
\end{equation}
with step size $\alpha>0$. After convergence, the learned parameters $\vect\theta^\star$ define a structured MPC controller deployed online by solving \eqref{eq:cm_mpc} in a receding-horizon fashion.
%%%%%%%%%%%%%%%%%%%%%%%%%%%%%%%%%%%%%%%%%%%%%%%%%%%%%%%%%%%%%%%%%%%%%%%%%%%%%%%%%%%%%%%%%%%%%%%%%%%%%%%%%%%%%%%%
\subsection{Convergence of Cost Matching}
\label{subsec:cm_convergence}
The cost-matching update \eqref{eq:cm_update} is a stochastic optimization of \eqref{eq:cm_loss}. The following theorem establishes convergence to a first-order stationary point under standard smoothness and bounded-variance assumptions.
\begin{Assumption}
\label{assumption}
For each sample $\xi=(\vect s,\vect a_{0:N-1},Q^{\mathrm{meas}})$, define $q_{\xi}(\vect\theta):=Q_{\vect\theta}^{\mathrm{MPC}}(\vect s,\vect a_{0:N-1})$. Assume that $q_{\xi}(\vect\theta)$ is continuously differentiable in $\vect\theta$, and that there exist constants $B_Q,G_Q,L_Q>0$ such that, for all samples $\xi$ and all $\vect\theta,\vect\theta'$,
\[
|q_{\xi}(\vect\theta)-Q^{\mathrm{meas}}|\le B_Q,\qquad
\|\nabla q_{\xi}(\vect\theta)\|\le G_Q,
\]
\[
\|\nabla q_{\xi}(\vect\theta)-\nabla q_{\xi}(\vect\theta')\|
\le L_Q\|\vect\theta-\vect\theta'\|.
\]
Moreover, the mini-batch gradient estimator $\widehat{\nabla}\mathcal L(\vect\theta)$ used in \eqref{eq:cm_update} is unbiased and has bounded variance, i.e.,
\[
\mathbb E[\widehat{\nabla}\mathcal L(\vect\theta)\mid \vect\theta]
=
\nabla\mathcal L(\vect\theta),\,\, \mathbb E\!\left[\left\|\widehat{\nabla}\mathcal L(\vect\theta)-\nabla\mathcal L(\vect\theta)\right\|^2\middle|\vect\theta\right]
\le \sigma^2.
\]\hfill  $\square$
\end{Assumption}

\begin{theorem}[Convergence of cost matching]
\label{thm:cm_convergence}
Suppose that Assumption 1 holds and the loss $\mathcal L$ in \eqref{eq:cm_loss} has Lipschitz-continuous gradient with constant $L_{\mathcal L}=2(G_Q^2+B_QL_Q)$. Let the parameter sequence $\{\vect\theta_j\}$ be generated by $\vect\theta_{j+1}=\vect\theta_j-\alpha \widehat{\nabla}\mathcal L(\vect\theta_j)$,
with a constant step size $0<\alpha\le 1/L_{\mathcal L}$. Then, for any $K\ge 1$,
\[
\frac{1}{K}\sum_{j=0}^{K-1}\mathbb E\!\left[\|\nabla\mathcal L(\vect\theta_j)\|^2\right]
\le
\frac{2(\mathcal L(\vect\theta_0)-\mathcal L_{\inf})}{\alpha K}
+\alpha L_{\mathcal L}\sigma^2,
\]
where $\mathcal L_{\inf}:=\inf_{\vect\theta}\mathcal L(\vect\theta)$. In the full-batch case ($\sigma=0$), $\mathcal L(\vect\theta_j)$ is monotonically nonincreasing and $\nabla \mathcal L(\vect\theta_j)\to \vect 0$ as $j\to\infty$.
\end{theorem}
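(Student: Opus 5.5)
The plan is to run the standard nonconvex SGD descent-lemma argument. The Lipschitz constant of $\nabla\mathcal L$ is assumed in the statement, but I would first justify it from Assumption~\ref{assumption}. For each sample, $\nabla\ell_\xi(\vect\theta)=2(q_\xi(\vect\theta)-Q^{\mathrm{meas}})\nabla q_\xi(\vect\theta)$ with $\ell_\xi:=(q_\xi-Q^{\mathrm{meas}})^2$. Adding and subtracting $2(q_\xi(\vect\theta)-Q^{\mathrm{meas}})\nabla q_\xi(\vect\theta')$ gives
\[
\|\nabla\ell_\xi(\vect\theta)-\nabla\ell_\xi(\vect\theta')\|\le 2|q_\xi(\vect\theta)-q_\xi(\vect\theta')|\,\|\nabla q_\xi(\vect\theta')\|+2|q_\xi(\vect\theta)-Q^{\mathrm{meas}}|\,\|\nabla q_\xi(\vect\theta)-\nabla q_\xi(\vect\theta')\|.
\]
The mean value theorem bounds $|q_\xi(\vect\theta)-q_\xi(\vect\theta')|\le G_Q\|\vect\theta-\vect\theta'\|$, so the right-hand side is at most $2(G_Q^2+B_QL_Q)\|\vect\theta-\vect\theta'\|$. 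Since the bounds hold uniformly in $\xi$, and the (empirical) expectation lets us differentiate under the sum, $L_{\mathcal L}=2(G_Q^2+B_QL_Q)$ follows.

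Next, I would apply the descent lemma $\mathcal L(\vect\theta_{j+1})\le \mathcal L(\vect\theta_j)+\langle\nabla\mathcal L(\vect\theta_j),\vect\theta_{j+1}-\vect\theta_j\rangle+\tfrac{L_{\mathcal L}}{2}\|\vect\theta_{j+1}-\vect\theta_j\|^2$ with $\vect\theta_{j+1}-\vect\theta_j=-\alpha\widehat\nabla\mathcal L(\vect\theta_j)$. Taking the conditional expectation given $\vect\theta_j$, unbiasedness gives $\mathbb E[\langle\nabla\mathcal L,\widehat\nabla\mathcal L\rangle\mid\vect\theta_j]=\|\nabla\mathcal L(\vect\theta_j)\|^2$. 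The bias–variance split gives $\mathbb E[\|\widehat\nabla\mathcal L\|^2\mid\vect\theta_j]\le\|\nabla\mathcal L(\vect\theta_j)\|^2+\sigma^2$. Hence
\[
\mathbb E[\mathcal L(\vect\theta_{j+1})\mid\vect\theta_j]\le\mathcal L(\vect\theta_j)-\alpha\Big(1-\tfrac{\alpha L_{\mathcal L}}{2}\Big)\|\nabla\mathcal L(\vect\theta_j)\|^2+\tfrac{\alpha^2L_{\mathcal L}}{2}\sigma^2 .
\]
Using $\alpha\le 1/L_{\mathcal L}$, we have $1-\alpha L_{\mathcal L}/2\ge 1/2$. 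Taking total expectations, summing over $j=0,\dots,K-1$, and telescoping, we use $\mathbb E[\mathcal L(\vect\theta_K)]\ge\mathcal L_{\inf}$. This requires $\mathcal L_{\inf}>-\infty$, which is immediate because $\mathcal L\ge 0$. This yields $\tfrac{\alpha}{2}\sum_j\mathbb E\|\nabla\mathcal L(\vect\theta_j)\|^2\le \mathcal L(\vect\theta_0)-\mathcal L_{\inf}+\tfrac{K\alpha^2L_{\mathcal L}\sigma^2}{2}$. Multiplying by $2/(\alpha K)$ gives exactly the stated bound.

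For the full-batch case $\sigma=0$, the same inequality holds deterministically: $\mathcal L(\vect\theta_{j+1})\le\mathcal L(\vect\theta_j)-\tfrac{\alpha}{2}\|\nabla\mathcal L(\vect\theta_j)\|^2$. This gives monotonicity, and summing over all $j$ gives $\sum_{j\ge0}\|\nabla\mathcal L(\vect\theta_j)\|^2\le 2(\mathcal L(\vect\theta_0)-\mathcal L_{\inf})/\alpha<\infty$. The terms of a convergent series tend to zero, so $\nabla\mathcal L(\vect\theta_j)\to\vect 0$.

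The main point requiring care is the first step: the Lipschitz bound on $\nabla\mathcal L$ and the interchange of gradient and expectation. For an empirical dataset $\mathcal D$ the expectation is a finite average, so this is trivial. For a general distribution I would invoke dominated convergence using the uniform bound $\|\nabla\ell_\xi\|\le 2B_QG_Q$. The rest is routine.
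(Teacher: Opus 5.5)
Your proposal is correct and follows the same route as the paper's proof. Both arguments use the per-sample Lipschitz bound $2(G_Q^2+B_QL_Q)$ on $\nabla\big(q_{\xi}-Q^{\mathrm{meas}}\big)^2$, the descent lemma, conditional expectation with unbiasedness and bounded variance (where $\alpha\le 1/L_{\mathcal L}$ yields the factor $1/2$), and telescoping. The points you spell out that the paper leaves implicit are all sound: the add-and-subtract derivation of the constant, the bias--variance split, $\mathcal L_{\inf}\ge 0$, and summability of $\|\nabla\mathcal L(\vect\theta_j)\|^2$ giving $\nabla\mathcal L(\vect\theta_j)\to\vect 0$ in the full-batch case.
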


\begin{proof}
For a single sample $\xi$, the gradient of the squared matching error is
\[
\nabla \big(q_{\xi}(\vect\theta)-Q^{\mathrm{meas}}\big)^2
=
2\big(q_{\xi}(\vect\theta)-Q^{\mathrm{meas}}\big)\nabla q_{\xi}(\vect\theta).
\]
By Assumption~1, this gradient is Lipschitz with constant $2(G_Q^2+B_QL_Q)$, hence $\nabla \mathcal L$ is also Lipschitz with the same constant. Using the standard descent lemma \cite{nesterov2013introductory} for smooth objectives, we have
\[
\mathcal L(\vect\theta_{j+1})
\le
\mathcal L(\vect\theta_j)
-\alpha \nabla \mathcal L(\vect\theta_j)^\top \widehat{\nabla}\mathcal L(\vect\theta_j)
+\frac{L_{\mathcal L}\alpha^2}{2}\|\widehat{\nabla}\mathcal L(\vect\theta_j)\|^2.
\]
Taking the conditional expectation and using the unbiased and bounded-variance assumptions yields
\[
\mathbb E[\mathcal L(\vect\theta_{j+1})]
\le
\mathbb E[\mathcal L(\vect\theta_j)]
-\frac{\alpha}{2}\mathbb E[\|\nabla\mathcal L(\vect\theta_j)\|^2]
+\frac{L_{\mathcal L}\alpha^2}{2}\sigma^2,
\]
where the factor $1/2$ follows from $\alpha\le 1/L_{\mathcal L}$. Summing this inequality from $j=0$ to $K-1$ gives the stated bound. The full-batch case follows by setting $\sigma=0$.
\end{proof}

Theorem~\ref{thm:cm_convergence} shows that the proposed cost-matching update converges to a first-order stationary set of \eqref{eq:cm_loss} under Assumption~\ref{assumption}, which is compatible with our setting since training is performed over a bounded rollout buffer.
%%%%%%%%%%%%%%%%%%%%%%%%%%%%%%%%%%%%%%%%%%%%%%%%%%%%%%%%%%%%%%%%%%%%%%%%%%%%%%%%%%%%%%%%%%%%%%%%%
\subsection{Constraints and Feasibility}
\label{sec:cm_constraints}

The treatment of constraints differs between the learning and deployment phases. During training, the MPC-induced action-value function \eqref{eq:cm_Q_mpc_def} incorporates state-related constraints through differentiable penalty terms based on the violation measures defined in \eqref{eq:cm_penalty}. This soft-penalty formulation allows constraint violations to be evaluated along data-driven rollouts without requiring the predicted trajectory to remain feasible, which is essential when the parameterized model does not perfectly capture the true system dynamics.

After training, the learned parameters $\vect\theta^\star$ define a structured MPC controller that is deployed by solving the constrained optimization problem \eqref{eq:cm_mpc} in a receding-horizon manner. In this phase, input and state constraints are enforced as hard constraints, thereby ensuring feasibility and safety in closed-loop operation. Because the learning process shapes the MPC model and cost to match the long-term performance of the real system, the resulting controller benefits from both data-driven adaptation and the constraint-handling guarantees inherent to MPC. This separation—soft constraint handling during learning and hard constraint enforcement during online MPC execution—enables efficient gradient-based training while preserving the safety and feasibility guarantees of MPC at runtime.

%%%%%%%%%%%%%%%%%%%%%%%%%%%%%%%%%%%%%%%%%%%%%%%%%%%%%%%%%%%%%%%%%%%%%%%%%%%%%%%%%%%%%%%%%%%%%%%%%
%%%%%%%%%%%%%%%%%%%%%%%%%%%%%%%%%%%%%%%%%%%%%%%%%%%%%%%%%%%%%%%%%%%%%%%%%%%%%%%%
%%%%%%%%%%%%%%%%%%%%%%%%%%%%%%%%%%%%%%%%%%%%%%%%%%%%%%%%%%%%%%%%%%%%%%%%%%%%%%%%%%%%%%%%%%%%%%%%%
\section{Instantiation on Humanoid Locomotion}
\label{sec:cm_instantiation}

This section instantiates the general cost-matching framework introduced in Section~\ref{sec:cost_matching} for the humanoid locomotion OCP described in Section~\ref{sec:problem_formulation}. Rollouts are collected from real-world execution, and the MPC-induced value is evaluated by replaying the logged MPC commands in the parameterized predictive model initialized at the measured state. We next describe the model and cost parameterizations adopted in this locomotion setting.

%%%%%%%%%%%%%%%%%%%%%%%%%%%%%%%%%%%%%%%%%%%%%%%%%%%%%%%%%%%%%%%%%%%%%%%%%%%%%%%%%%%%%%%%%%%%%%%%%
\subsection{Parameterized Model and Objective}
\label{subsec:cm_instantiation_model}

The predictive model $\vect f_{\vect\theta}$ used in \eqref{eq:cm_rollout_dyn} is obtained by discretizing the centroidal dynamics \eqref{eq:centroidal_dynamics}. To account for systematic mismatch between the predictive model and the real system, we parameterize the centroidal momentum propagation using learnable gains:
\begin{subequations}
\label{eq:cm_theta_centroidal}
\begin{align}
\hspace*{-2mm}\dot{\vect h}_{lin} &=
\vect\theta_{hl}\odot
\sum_{i\in\{L,R\}}\vect f_{c,i} + M\vect g,
\\
\hspace*{-2mm}\dot{\vect h}_{ang} &=
\vect\theta_{ha}\odot
\sum_{i\in\{L,R\}}\Big(\vect r_{c,i}(\vect q)\times \vect f_{c,i} + \vect m_{c,i}\Big),
\end{align}
\end{subequations}
where $\vect\theta_{hl}\in\mathbb R_{>0}^{3}$ and $\vect\theta_{ha}\in\mathbb R_{>0}^{3}$ are included in $\vect\theta$. This parameterization preserves the centroidal dynamics structure while absorbing systematic discrepancies caused by imperfect contact wrench realization, unmodeled compliance, and abstraction effects introduced by the low-level tracking controller.

The parameterized stage cost retains the decomposition in~\eqref{eq:ocp_l} as
$L_{\vect\theta}
=
L_{\mathrm{trac}}(\vect\theta) + L_{\mathrm{base}}(\vect\theta) + L_{\mathrm{com}}(\vect\theta) + L_{\mathrm{swin}}(\vect\theta) + L_{\mathrm{torq}}(\vect\theta),$
and the terminal cost is defined as
$T_{\vect\theta}(\vect x_N)=\|\vect x_N-\vect x_N^{\mathrm{ref}}\|_{\vect Q_f(\vect\theta)}^2$.
Quadratic weights are parameterized through diagonal scalings as
$\vect Q_{\vect\theta}=\mathrm{diag}(\vect\theta_q),\,
\vect R_{\vect\theta}=\mathrm{diag}(\vect\theta_r),\,
\vect {Q_f}_{\vect\theta}=\mathrm{diag}({{\vect\theta}_q}_f), $
where $\vect\theta_q, \vect\theta_r, {{\vect\theta}_q}_f\in\mathbb{R}^{12+n_j}_{\ge 0}$ are learnable vectors. The weight matrices $\vect Q_{\mathrm{base}}(\vect\theta)$, $\vect Q_{\mathrm{com}}(\vect\theta)$, $\vect Q_{\mathrm{sw}}(\vect\theta)$, and $\vect Q_{\mathrm{torq}}(\vect\theta)$ are parameterized analogously.

%%%%%%%%%%%%%%%%%%%%%%%%%%%%%%%%%%%%%%%%%%%%%%%%%%%%%%%%%%%%%%%%%%%%%%%%%%%%%%%%%%%%%%%%%%%%%%%%%
\subsection{Cost-Matching Training Loop}
\label{subsec:cm_instantiation_algo}

Training is performed \emph{on-policy} using the parameterized MPC controller. At iteration $j$, the parameter vector $\vect\theta_j$ defines a closed-loop locomotion policy obtained by solving \eqref{eq:cm_mpc} in a receding-horizon manner. Trajectories are collected by executing the resulting commands through the fixed low-level tracking module within a high-fidelity simulator. From the recorded trajectories, and for each sampled index $k$, we compute the measured return $Q^{\mathrm{meas}}_k$ (data length $M$) and the rollout-based surrogate $Q^{\mathrm{MPC}}_{\vect\theta_j,k}$ (data length $N$). The parameters are then updated by minimizing the squared discrepancy between these quantities, as summarized in Algorithm~\ref{alg:cm_locomotion}.

\begin{algorithm}[htbp]
\caption{On-policy Cost-Matching Training for Humanoid Locomotion.}
\label{alg:cm_locomotion}
\begin{algorithmic}[1]
\small
\STATE \textbf{Input:} initial parameters $\vect\theta_0$, discount $\gamma$, stepsize $\alpha$
\FOR{$j=0,1,2,\dots$}
  \STATE \textbf{Data collection.} Run the closed-loop controller induced by \eqref{eq:cm_mpc} with $\vect\theta_j$ for $M$ steps through the full execution stack, and log
  $\mathcal D_j=\{(\vect s_k,\vect a_k,\ell_k)\}_{k=0}^{M-1}$ with $\vect a_k:=\vect u_k$ and $\ell_k:=L(\vect s_k,\vect a_k)$.
  \STATE Sample a mini-batch of indices $\mathcal B\subset\{0,\dots,M-1\}$.
  \FOR{each $k\in\mathcal B$}
    \STATE \textbf{Measured return.} Compute $Q^{\mathrm{meas}}_k=$\\ $\qquad\qquad\qquad\sum_{i=0}^{M-k-1}\gamma^i\,L(\vect s_{k+i},\vect a_{k+i})$ via \eqref{eq:cm_Q_meas}.
    \STATE \textbf{MPC-side rollout.} Set $\vect x_0:=\vect s_k$ and propagate for $i=0,\dots,N-1$:
    $\vect x_{i+1}=\vect f_{\vect\theta_j}(\vect x_i,\vect a_{k+i})$.
    \STATE \textbf{Surrogate value.} Evaluate $Q^{\mathrm{MPC}}_{\vect\theta_j,k}:=$\\$\qquad\qquad\qquad\qquad Q^{\mathrm{MPC}}_{\vect\theta_j}(\vect s_k,\vect a_{k:k+N-1})$ via \eqref{eq:cm_Q_mpc_def}.
  \ENDFOR
  \STATE Form $\mathcal L(\vect\theta_j)=\frac{1}{|\mathcal B|}\sum_{k\in\mathcal B}\big(Q^{\mathrm{MPC}}_{\vect\theta_j,k}-Q^{\mathrm{meas}}_k\big)^2$.
  \STATE Update $\vect\theta_{j+1}\leftarrow\vect\theta_j-\alpha\nabla_{\vect\theta}\mathcal L(\vect\theta_j)$.
\ENDFOR
\end{algorithmic}
\end{algorithm}

%%%%%%%%%%%%%%%%%%%%%%%%%%%%%%%%%%%%%%%%%%%%%%%%%%%%%%%%%%%%%%%%%%%%%%%%%%%%%%%%%%%%%%%%%%%%%%%%%
\subsection{Implications for Locomotion Control}
\label{subsec:cm_instantiation_benefits}

For bipedal locomotion, this instantiation yields several practical benefits.

\noindent\textbf{(i) Mitigating short-horizon bias.}
By matching $Q^{\mathrm{MPC}}_{\vect\theta}$ to long-horizon returns, the learned stage and terminal shaping terms effectively act as a tail-cost approximation, mitigating the short-horizon bias inherent in real-time humanoid MPC.

\noindent\textbf{(ii) Compensating model mismatch.}
The learned parameters adapt both the prediction model and the objective to systematic discrepancies between the planning model and high-fidelity execution, including modeling inaccuracies and abstraction effects introduced by the tracking controller.

\noindent\textbf{(iii) Robustness to disturbances.}
When trajectories are collected under external disturbances, the measured returns reflect the recovery behaviors of the system. Consequently, the learned MPC objective is implicitly tuned toward policies that improve disturbance rejection under the same constrained MPC structure.

\noindent\textbf{(iv) Policy improvement.}
Although the update minimizes a value discrepancy rather than directly maximizing return, improving the fidelity of the MPC internal cost-to-go makes the receding-horizon minimizer of the surrogate objective more consistent with the true long-term performance, thereby yielding improved closed-loop policies.

\noindent\textbf{(v) Training efficiency.}
Each update differentiates through a length-$N$ rollout without repeatedly solving the MPC optimization problem within the learning loop, resulting in a computational cost that scales linearly with $N$.
%
%%%%%%%%%%%%%%%%%%%%%%%%%%%%%%%%%%%%%%%%%%%%%%%%%%%%%%%%%%%%%%%%%%%%%%%%%%%%%%%%%%%%%%%%%%%%%
\section{Simulation Studies and Analysis}
\label{sec:sim}

This section evaluates the proposed cost-matching MPC (CM-MPC) on a commercial humanoid platform (Unitree G1) using the locomotion stack described in Section~\ref{sec:problem_formulation}. In the following experiments, we compare two controllers operating under identical MPC horizon length, constraints, and tracking stack: (i) a manually tuned baseline using the initial parameters $\vect\theta_0$, and (ii) the learned CM-MPC controller using the converged parameters $\vect\theta^\star$.

%%%%%%%%%%%%%%%%%%%%%%%%%%%%%%%%%%%%%%%%%%%%%%%%%%%%%%%%%%%%%%%%%
\subsection{Cost-Matching Diagnostics}
\label{subsec:diagnostics}

The learning objective is to reduce the discrepancy between the MPC-induced surrogate cost-to-go $Q^{\mathrm{MPC}}_{\vect\theta}$ and the measured long-horizon return $Q^{\mathrm{meas}}$ collected from closed-loop executions. Figure~\ref{fig:cm_mpc_closed_loop_mse} reports the validation mean squared error (MSE) of this value mismatch across training rounds. The mismatch decreases rapidly during the early training iterations and gradually converges as learning progresses.

The block-wise evolution of $\vect\theta$ provides insights into the mechanisms underlying this improvement. In particular, the terminal cost weights ($\vect Q_f$) increase significantly throughout training, effectively acting as an implicit tail-cost approximation that compensates for the inherent myopia of the finite-horizon MPC formulation. Concurrently, torque-regularization parameters increase, which suppresses impulsive control actions, while swing-foot tracking weights decrease slightly. This reshaping of the cost landscape prevents overly strict kinematic tracking and instead allows the solver greater flexibility to prioritize whole-body balance and compliant behavior under uncertainties.

\begin{figure}[htbp]
  \centering
  \includegraphics[width=0.9\linewidth]{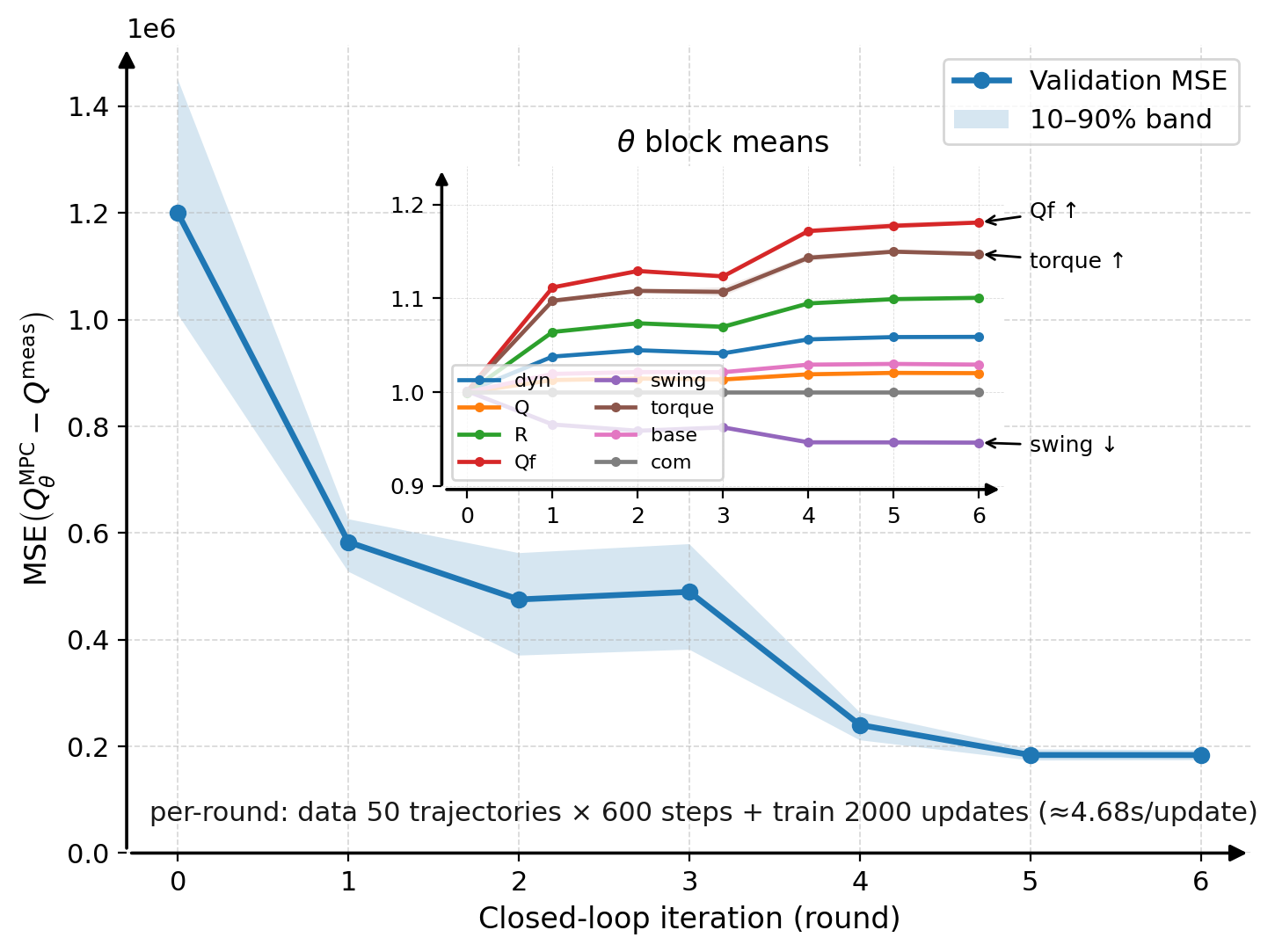}
  \caption{Closed-loop validation of the value mismatch $\mathrm{MSE}\!\left(Q^{\mathrm{MPC}}_{\vect{\theta}}-Q^{\mathrm{meas}}\right)$ across training rounds (shaded region: $10-90$\% across seeds). At each training round, $50$ trajectories are collected ($600$ steps per trajectory), and the cost-matching objective is optimized using $2000$ gradient updates with discount factor $\gamma=0.985$. The inset summarizes the normalized block-wise evolution of $\vect\theta$, illustrating how learning reshapes the MPC objective.}
  \label{fig:cm_mpc_closed_loop_mse}
\end{figure}

Figure~\ref{fig:a2_value_matching} provides complementary diagnostics at the sample level. The figure shows the density of $Q^{\mathrm{MPC}}_{\vect{\theta}}$ versus $Q^{\mathrm{meas}}$ before training and after convergence. After learning, the root-mean-square error (RMSE) between the two values decreases and the residual distribution tightens, indicating improved consistency between the MPC surrogate value and the measured return. Together, these diagnostics confirm that the learned parameters significantly improve the fidelity of the MPC internal cost-to-go approximation.

\begin{figure}[tbhp]
  \centering
  \begin{subfigure}[t]{0.49\linewidth}
    \centering
    \includegraphics[width=\linewidth]{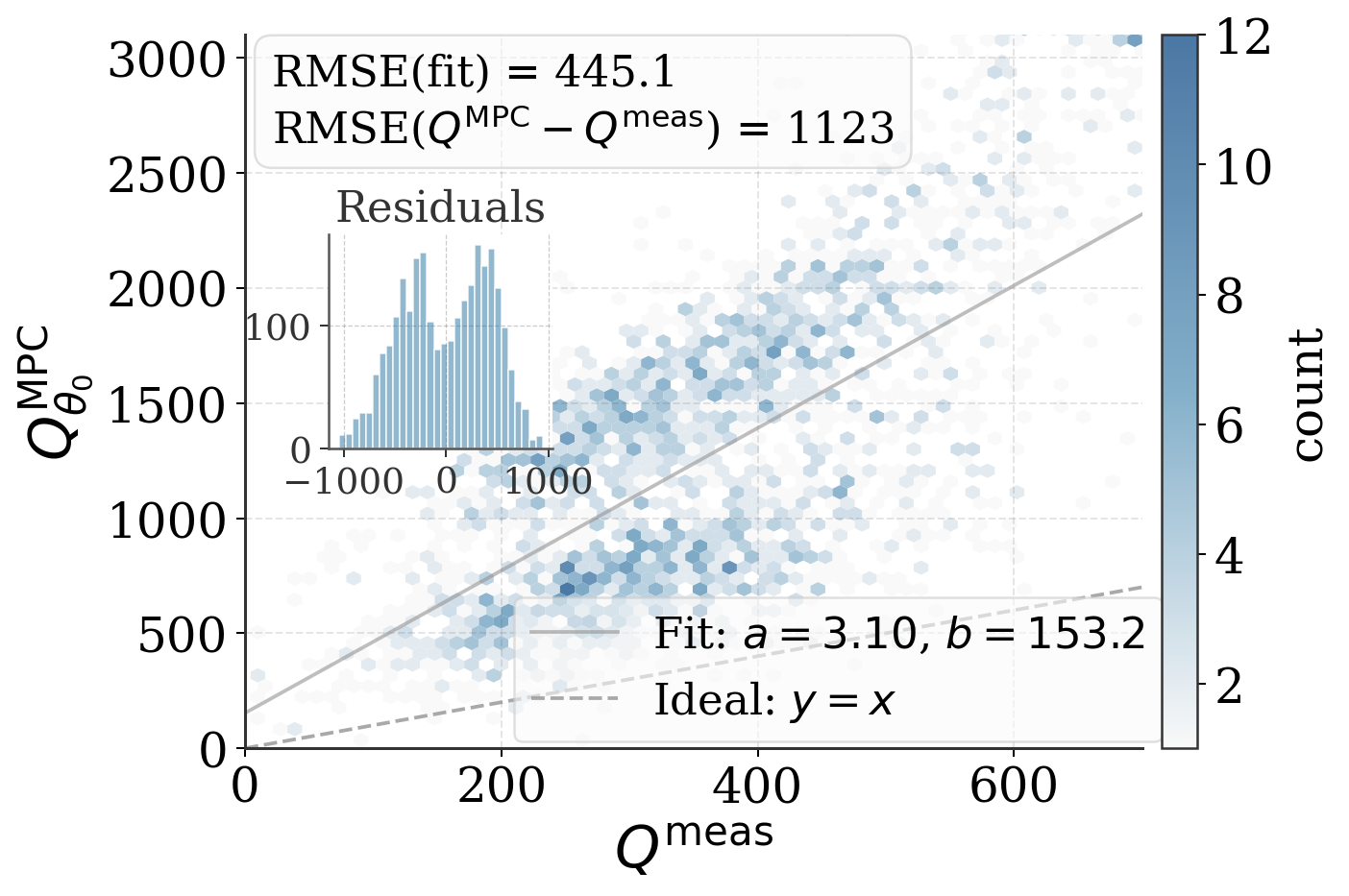}
    \caption{Round 0 ($\vect{\theta}_0$).}
    \label{fig:a2_value_matching_a}
  \end{subfigure}
  \hfill
  \begin{subfigure}[t]{0.49\linewidth}
    \centering
    \includegraphics[width=\linewidth]{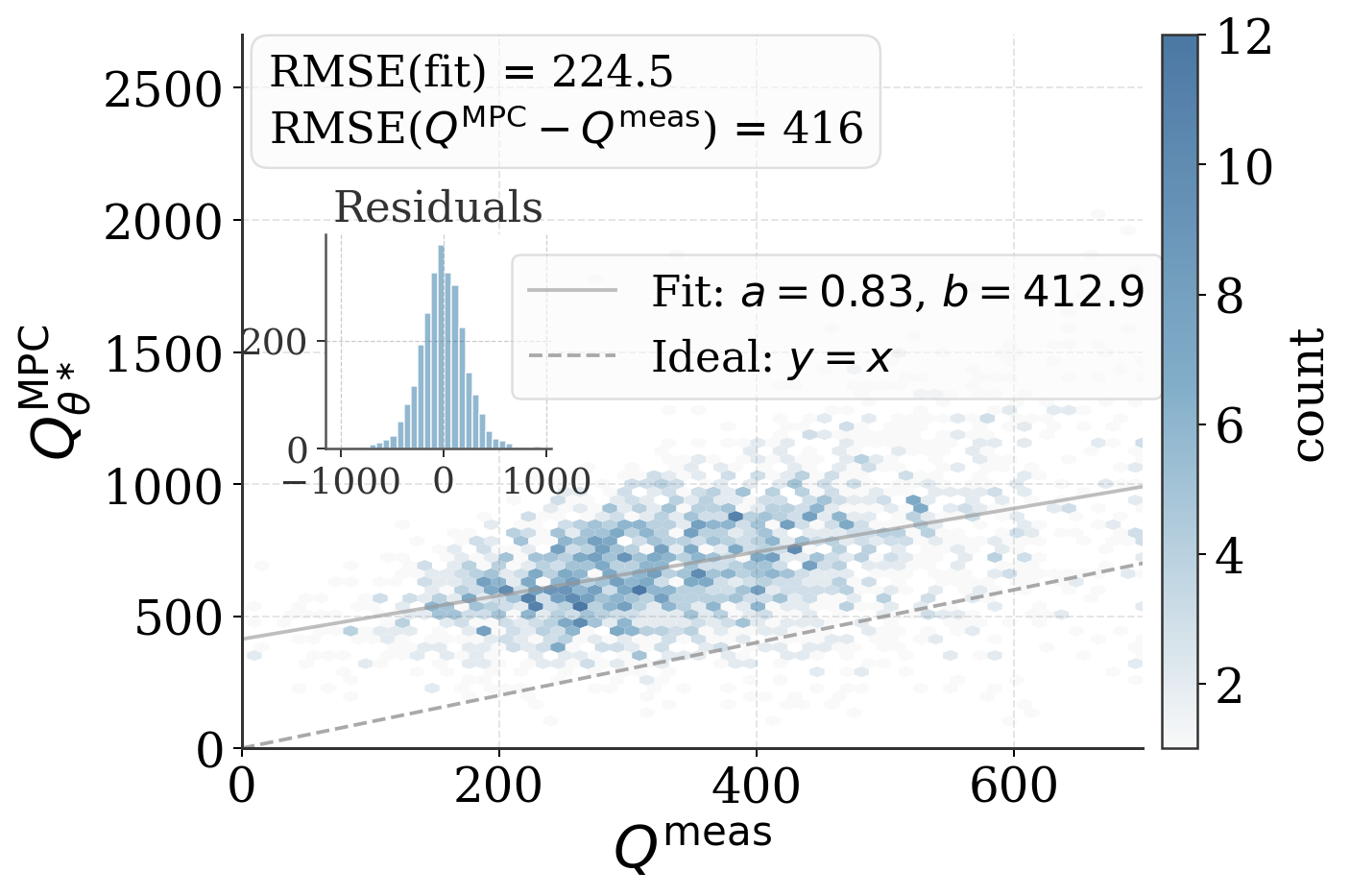}
    \caption{Converged ($\vect{\theta}^{\star}$).}
    \label{fig:a2_value_matching_b}
  \end{subfigure}
  \caption{Value-matching diagnostics on a fixed evaluation set: density plot of $Q^{\mathrm{MPC}}_{\vect{\theta}}$ versus $Q^{\mathrm{meas}}$.}
  \label{fig:a2_value_matching}
  \vspace{-10pt}
\end{figure}
%%%%%%%%%%%%%%%%%%%%%%%%%%%%%%%%%%%%%%%%%%%%%%%%%%%%%%%%%%%%%%%%%
\subsection{Robustness Evaluation under Disturbances}
\label{subsec:robustness}

To assess whether the improved surrogate value representation translates into better closed-loop behavior, we evaluate both the baseline MPC and the learned CM-MPC under a scripted disturbance benchmark applied to the Unitree G1 humanoid. 

As illustrated in Figure~\ref{fig:push_robot}, the disturbance profile consists of three lateral force pulses $f_{\mathrm{ext},y}=18\,\mathrm{N}$ with duration $0.10\,\mathrm{s}$ applied at $t=\{12.00,\,12.75,\,13.55\}\,\mathrm{s}$ (P1/2/4), and one yaw torque pulse $m_{\mathrm{ext},z}=8\,\mathrm{N\,m}$ with duration $0.06\,\mathrm{s}$ at $t=13.05\,\mathrm{s}$ (P3). This mixed force–torque sequence excites both linear and angular momentum channels without modifying the nominal reference trajectories. Consequently, differences in post-disturbance recovery can be attributed to the objective and model reshaping induced by the learned parameters $\vect{\theta}$.

\begin{figure}[htbp]
    \centering
    \includegraphics[width=\columnwidth]{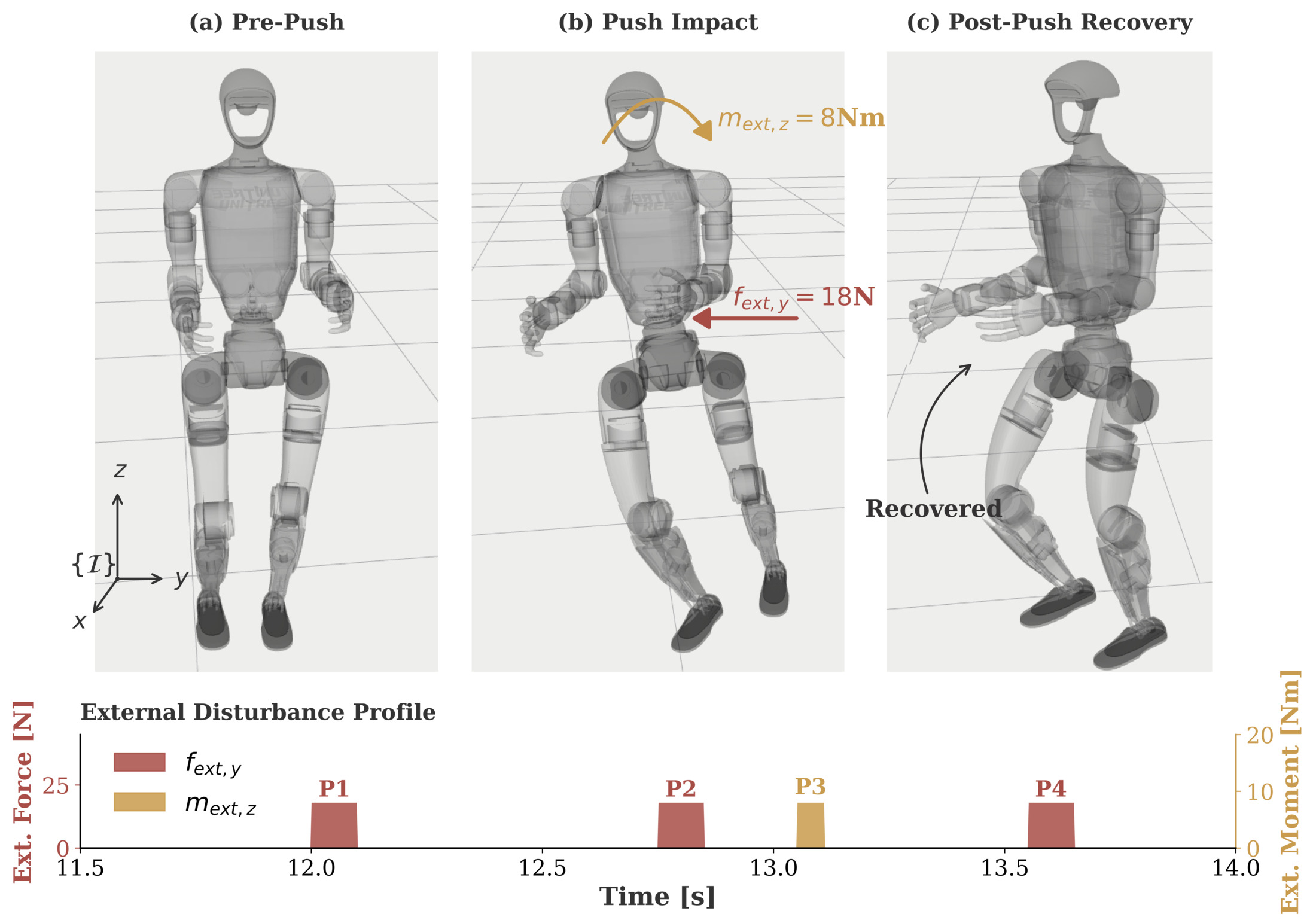}
    \caption{Simulation snapshots of the humanoid during locomotion under a time-scheduled disturbance benchmark consisting of lateral force pulses $f_{\mathrm{ext},y}$ and a yaw-torque pulse $m_{\mathrm{ext},z}$.}
    \label{fig:push_robot}
    \vspace{-5pt}
\end{figure}

Figure~\ref{fig:closed_loop_traj} shows a representative rollout comparing the baseline controller and CM-MPC. From top to bottom, the plots report the tracking error $\|x-x_{\mathrm{ref}}\|$, control effort $\|u\|$, control jitter $\|\dot u\|$ (discrete-time input difference), linear momentum error $\|h_{\mathrm{lin}}-h_{\mathrm{lin,ref}}\|$, and angular momentum error $\|h_{\mathrm{ang}}-h_{\mathrm{ang,ref}}\|$.

Compared with the baseline controller, CM-MPC settles earlier to the pre-disturbance error band and produces smoother corrective actions, as evidenced by reduced control effort $\|u\|$ and lower control jitter $\|\dot u\|$ during the recovery phase. At the same time, the momentum tracking errors do not consistently decrease: $\|h_{\mathrm{lin}}-h_{\mathrm{lin,ref}}\|$ remains broadly comparable, and $\|h_{\mathrm{ang}}-h_{\mathrm{ang,ref}}\|$ may increase during parts of the transient. This behavior reflects a deliberate trade-off induced by the learned parameters $\vect{\theta}^{\star}$. By relaxing strict tracking penalties (e.g., reduced swing-foot tracking weights) while increasing actuation regularization (larger $R$ / torque-related weights), CM-MPC allocates its limited control authority under constraints toward stabilizing whole-body motion and rejecting disturbances, rather than enforcing exact momentum reference tracking at every instant.
\begin{figure*}[htbp]
  \centering
  \includegraphics[width=0.85\linewidth]{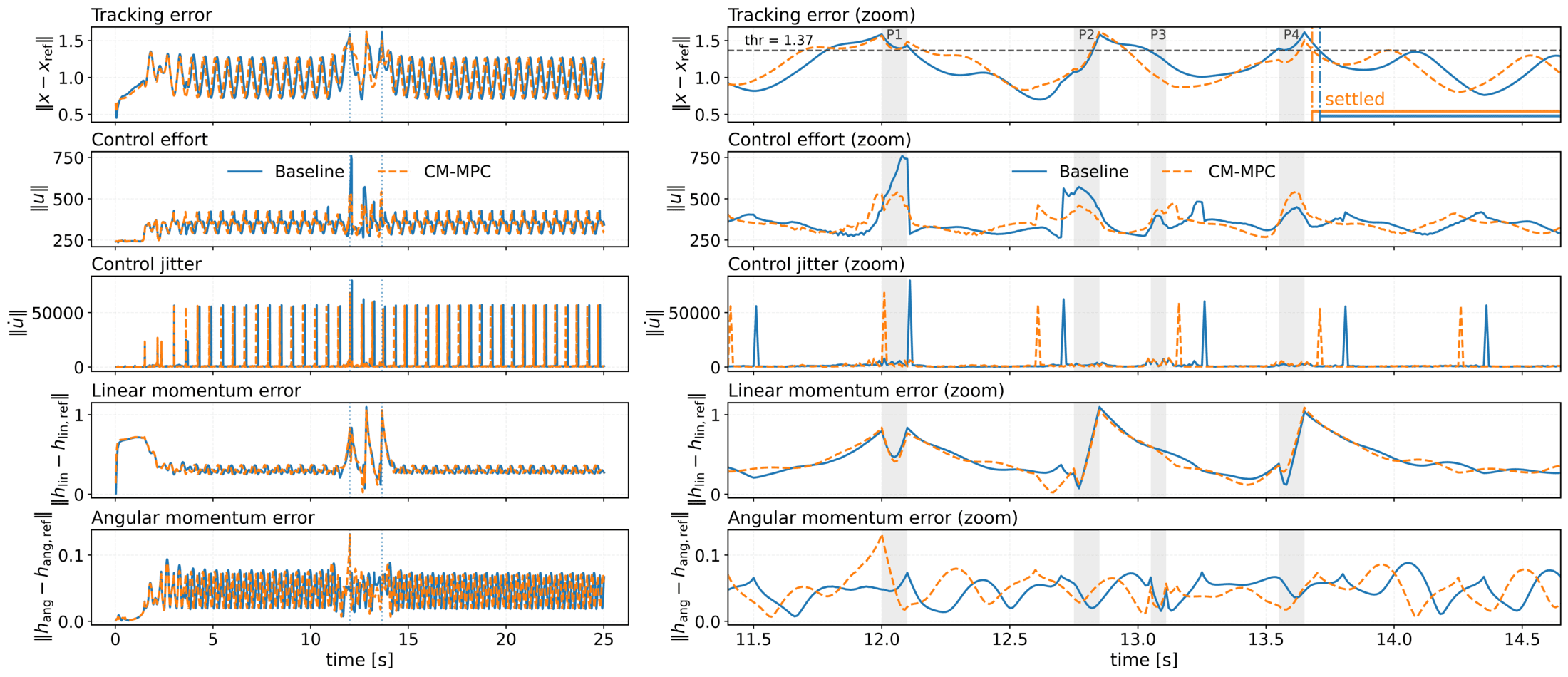}
  \caption{Representative closed-loop response under the disturbance benchmark. Left: full-horizon signals. Right: zoom around the push interval. The gray bands indicate push windows. The dashed horizontal line indicates the settle threshold computed from pre-disturbance statistics, and the ``settled'' marker indicates the first time the error remains below this threshold.}
  \label{fig:closed_loop_traj}
  \vspace{-12pt}
\end{figure*}

%%%%%%%%%%%%%%%%%%%%%%%%%%%%%%%%%%%%%%%%%%%%%%%%%%%%%%%%%%%%%%%%%
\subsection{Comprehensive Statistical Analysis}
\label{subsec:stats}

To validate the observed improvements statistically, Table~\ref{tab:comprehensive_metrics} summarizes results over six disturbance trials (mean $\pm$ std). The learned CM-MPC improves nominal tracking while reducing high-percentile control effort, indicating that the learned parameters $\vect{\theta}$ produce a better trade-off between tracking accuracy and control smoothness under the same constrained MPC structure. More importantly, the post-push recovery metrics show consistent improvements. The \texttt{Post-Push RMS} decreases, while the \texttt{Post-Push Peak} and \texttt{IAE} (Integral Absolute Error) are reduced by $4.80\%$ and $6.82\%$, respectively, indicating smaller overshoot and less accumulated deviation following the disturbance. The \texttt{Settling Time} improves by $31.25\%$ under the same threshold-and-hold criterion, reflecting faster return to the pre-disturbance performance band. Momentum metrics remain largely comparable, with a minor trade-off in linear momentum error ($-0.99\%$). This observation is consistent with the qualitative analysis in Figure~\ref{fig:closed_loop_traj}, where the learned controller prioritizes faster stabilization and smoother actuation during transient recovery rather than enforcing strict reference momentum tracking.

%%%%%%%%%%%%%%%%%%%%%%%%%%
\begin{table}[htbp]
\centering
\caption{Comprehensive Evaluation of Metrics (Mean $\pm$ Std) over Six Trials.}
\label{tab:comprehensive_metrics}
\setlength{\tabcolsep}{3pt}
\resizebox{\columnwidth}{!}{%
\renewcommand{\arraystretch}{1.15}
\begin{tabular}{@{} l c c c @{}}
\toprule
\textbf{Metric} & \textbf{Baseline} & \textbf{CM-MPC} & \textbf{Impr. (\%)} \\
\midrule
\multicolumn{4}{@{}l}{\textit{\textbf{Overall Tracking}}} \\
\cmidrule(r){1-1}
RMS Error ($\|\vect e\|_2$)          & 1.022 $\pm$ 0.012 & 1.009 $\pm$ 0.016 & +1.29 \\
Peak Error ($\|\vect e\|_{\infty}$)  & 1.620 $\pm$ 0.021 & 1.597 $\pm$ 0.045 & +1.41 \\
\addlinespace[0.4em]
\multicolumn{4}{@{}l}{\textit{\textbf{Control Effort \& Smoothness}}} \\
\cmidrule(r){1-1}
Max Effort ($\|\vect u\|_{99\%}$)            & 472.7 $\pm$ 4.3   & 447.8 $\pm$ 7.1   & +\textbf{5.26} \\
Max Jitter ($\|\dot{\vect u}\|_{99\%}$) [$10^4$] & 5.66 $\pm$ 0.00 & 5.60 $\pm$ 0.03 & +1.10 \\
\addlinespace[0.4em]
\multicolumn{4}{@{}l}{\textit{\textbf{Momentum Analysis}}} \\
\cmidrule(r){1-1}
Lin. Mom. ($\|\vect h_{\mathrm{lin}}\|_2$)      & 0.468 $\pm$ 0.011 & 0.467 $\pm$ 0.011 & +0.35 \\
Ang. Mom. ($\|\vect h_{\mathrm{ang}}\|_2$)     & 0.048 $\pm$ 0.001 & 0.048 $\pm$ 0.001 & +1.18 \\
Lin. Mom. Err. ($\|\vect h_{\mathrm{lin}} - \vect h_{\mathrm{lin}}^{\mathrm{ref}}\|_2$) & 0.372 $\pm$ 0.010 & 0.376 $\pm$ 0.010 & -0.99 \\
Ang. Mom. Err. ($\|\vect h_{\mathrm{ang}} - \vect h_{\mathrm{ang}}^{\mathrm{ref}}\|_2$) & 0.048 $\pm$ 0.001 & 0.048 $\pm$ 0.001 & +1.18 \\
\addlinespace[0.4em]
\multicolumn{4}{@{}l}{\textit{\textbf{Post-Push Recovery}}} \\
\cmidrule(r){1-1}
Post-Push RMS ($\|\vect e^{\mathrm{post}}\|_2$)      & 1.026 $\pm$ 0.010 & 1.014 $\pm$ 0.015 & +1.17 \\
Post-Push Peak ($\|\vect e^{\mathrm{post}}\|_{\infty}$)& 1.539 $\pm$ 0.030 & 1.465 $\pm$ 0.020 & +\textbf{4.80} \\
Integral Abs. Err. (IAE)               & 0.547 $\pm$ 0.033 & 0.510 $\pm$ 0.030 & +\textbf{6.82} \\
Settling Time ($t_s$)                  & 0.053 $\pm$ 0.010 & 0.037 $\pm$ 0.010 & +\textbf{31.25} \\
\bottomrule
\end{tabular}%
}
\vspace{0.3em} 
\caption*{\footnotesize
\textit{$\divideontimes$ Post-push recovery metrics.} Let $\vect e^{\mathrm{post}}(t)\triangleq\|\vect x(t)-\vect x_{\mathrm{ref}}(t)\|$ evaluated over the post-push window $t\in[t_{\mathrm{push\_end}},\,t_{\mathrm{push\_end}}+T_{\mathrm{post}}]$. Define $\texttt{Post-Push RMS}=\|\vect e^{\mathrm{post}}\|_2$, $\texttt{Post-Push Peak}=\|\vect e^{\mathrm{post}}\|_{\infty}$, and
$\texttt{IAE}=\int \max(\vect e^{\mathrm{post}}-\bar{\vect e}_{\mathrm{pre}},0)\,dt$ (where $\bar{\vect e}_{\mathrm{pre}}$ is the pre-push baseline). \texttt{Settling Time} $t_s$ is the first time after $t_{\mathrm{push\_end}}$ that $\vect e^{\mathrm{post}}(t)$ remains within a pre-push threshold for a fixed hold duration.}
\vspace{-10pt}
\end{table}

%%%%%%%%%%%%%%%%%%%%%%%%%%%%%%%%%%%%%%%%%%%%%%%%%%%%%%%%%%%%%%%%%%%%%%%%%%%%%%%%

Overall, the simulation results demonstrate that the cost-matching procedure effectively minimizes the discrepancy between $Q^{\mathrm{MPC}}_{\vect\theta}$ and $Q^{\mathrm{meas}}$ throughout training. The optimized parameters $\vect\theta^\star$ lead to improved disturbance rejection characterized by faster recovery, reduced control effort, and smoother actuation, while simultaneously compensating for modeling discrepancies in the simplified centroidal dynamics through data-driven adaptation.
%%%%%%%%%%%%%%%%%%%%%%%%%%%%%%%%%%%%%%%%%%%%%%%%%%%%%%%%%%%%%%%%%%%%%%%%%%%%%%%%%%%%%%%%%
\section{Conclusion}
\label{sec:conclusion}
This paper presents a cost-matching framework for learning the parameters of a constrained centroidal MPC controller for humanoid locomotion. By matching the MPC surrogate cost-to-go to measured long-horizon returns from closed-loop rollouts, the method avoids repeated MPC solves in the learning loop. Simulations show improved value consistency and enhanced robustness to model mismatch and external disturbances, suggesting that cost matching is a practical and computationally efficient way to shape long-horizon locomotion behavior while retaining standard constrained MPC deployment.

%%%%%%%%%%%%%%%%%%%%%%%%
\bibliographystyle{IEEEtran}
\bibliography{References}
\end{document}